\newtheorem{definition}{Definition}
\newtheorem{theorem}{Theorem}
\newcommand{\deftitle}[1]{\textbf{#1}\xspace}
\newcommand{\Problem}{\ensuremath{\mathcal{P}}}
\newcommand{\eqbydef}{\ensuremath{\doteq}}
\newcommand{\tuple}[1]{\ensuremath{\langle #1 \rangle}\xspace}
\newcommand{\blexec}[1]{\textsc{BlEx}(#1)\xspace}
\newcommand{\dreexec}{\textsc{DREEx}\xspace}
\algrenewcommand\algorithmicindent{1em}%
\algrenewcommand\alglinenumber[1]{\tiny #1:}
\algnewcommand\algorithmicforeach{\textbf{for each}}
\algnewcommand{\IfThenElse}[3]{
  \State \algorithmicif\ #1\ \algorithmicthen\ #2\ \algorithmicelse\ #3}
\algnewcommand{\MyIf}[2]{
  \State \algorithmicif\ #1\ \algorithmicthen\ #2}
\crefname{figure}{figure}{figures}
\Crefname{figure}{Figure}{Figures}
\newcommand{\myparagraph}[1]{\vskip 4pt \noindent \textbf{#1. }}
\title{Towards Efficient Anytime Computation and Execution of Decoupled Robustness Envelopes for Temporal Plans}
\author{Michael Cashmore\textsuperscript{1},
Alessandro Cimatti\textsuperscript{2},
Daniele Magazzeni\textsuperscript{3},
Andrea Micheli\textsuperscript{2},
Parisa Zehtabi\textsuperscript{3}\\
\textsuperscript{1}University of Strathclyde, United Kingdom, \{michael.cashmore\}@strath.ac.uk, \\
\textsuperscript{2}Fondazione Bruno Kessler, Italy, \{cimatti, amicheli\}@fbk.eu, \\
\textsuperscript{3}King's College London, United Kingdom, \{daniele.magazzeni, parisa.zehtabi\}@kcl.ac.uk\\
}
\begin{document}

\maketitle

\begin{abstract}
One of the major limitations for the employment of model-based planning and scheduling in practical applications is the need of costly re-planning when an incongruence between the observed reality and the formal model is encountered during execution. Robustness Envelopes characterize the set of possible contingencies that a plan is able to address without re-planning, but their exact computation is extremely expensive; furthermore, general robustness envelopes are not amenable for efficient execution.

In this paper, we present a novel, anytime algorithm to approximate Robustness Envelopes, making them scalable and executable. This is proven by an experimental analysis showing the efficiency of the algorithm, and by a concrete case study where  the execution of robustness envelopes significantly reduces the number of re-plannings.
\end{abstract}

\section{Introduction}

When planning and scheduling techniques are employed in practical applications, one of the major problems is the need for on-line re-planning when the observed contingencies are not aligned with the ones that were considered at planning time.
These situations are common, because it is arguably impossible to predict the entire range of situations an autonomous system can encounter, especially when the planning domain encompasses time and temporal constraints.
Unfortunately, re-planning can be costly in terms of time, and computational resources can be scarce on-board, so limiting the use of re-planning is very important for practical purposes.
In principle, it is also possible to continue with the execution of a plan even when the observed contingencies are unexpected, optimistically hoping for a successful completion.
However, this approach offers no formal guarantee, and is prone to the risk of continuing execution of a plan that is bound to fail.
Several approaches have been proposed in the literature to address this problem
(see~\cite{ing17} for a survey focused on robotics).
Some authors propose to post-process plans and generalize them relying on the scheduling constraints that are relevant for execution~\cite{policella-flexibility,muscettola-envelope,frank-linear-envelope}.
Another line of research focuses on the creation of ``least commitment plans'', i.e. plans that are left partially open by the planner so that the execution can be adapted to some variation in the contingencies~\cite{ixtet,rmpl,europa,apsi,platinum}.
Others tackled the idea of transforming temporal plans with no adaptability into flexible plans~\cite{flexibility-do}.
Finally, one can explicitly model the uncertainties in the planning problem and construct a plan that offers formal guarantees with respect to such a model. Examples include Conformant and Contingent Planning~\cite{traverso-book}, Probabilistic Planning~\cite{probabilistic-planning} and Strong Temporal Planning with Uncertain Durations~\cite{micheli_aij} that considers temporal uncertainty in the durations of actions.
Recently, \emph{Robustness Envelopes} (REs) have been proposed to overcome several limitations of the approaches mentioned above. REs formally capture the possible contingencies that a given temporal plan, obtained by planning in a \emph{deterministic domain}, can deal with, without having to re-plan \cite{robustness-envelopes}.
REs are regions defined over a set of numeric parameters that represent possible contingencies, and contain all the parameter valuations ensuring plan validity.
In general, REs may be non-convex, and can express dependencies between the parameters.
%
%
However, the technique proposed in~\cite{robustness-envelopes} has two main drawbacks limiting its practical applicability.
First, the exact computation of REs is extremely expensive: the proposed approach is doubly exponential in the size of the planning problem. Second, REs in their general form are not suited for efficient execution: the dependencies among parameters might require run-time reasoning.

%
In this paper, we overcome these limitations, achieving scalability and executability.
%
%
We focus on \emph{Decoupled Robustness Envelopes} (DREs), i.e. hyper-rectangular REs where the dependencies among parameters are not present, and are thus much easier to execute.
Our first contribution is a novel and scalable algorithm for computing DREs as sound approximations of REs.
The algorithm is anytime, and proceeds by incrementally under-approximating the RE with increasingly large DREs.
The algorithm can be stopped at any time, providing a meaningful result already amenable to start execution.
In its general formulation, the RE for a given plan is naturally modeled as a quantified first order formula in the theory of Linear Real Arithmetic. Our algorithm does not need to precisely compute the quantifier-free description of the RE (which requires an expensive step of quantifier elimination, and is ultimately responsible for the inefficiency demonstrated in~\cite{robustness-envelopes}). Rather, it starts from a degenerate DRE consisting of a single point, and progressively tries to enlarge it along different dimensions, checking if each extension is contained in the RE, until a given precision is reached. The algorithm relies on \emph{quantifier-free} queries to a Satisfiability Modulo Theory~\cite{smt} solver.

Our second contribution is to demonstrate the practical use of DREs in a robotic executor, extending the classical flow from planning to execution to re-planning, as follows. First, a plan is generated from a deterministic model using temporal planning technologies, and transformed into a Simple Temporal Network (STN) formulation~\cite{dechter-stn}; at this point, we parametrize the durations of some of the actions in the plan and/or the consumption rates in the domain specification. DREs are then computed for the introduced parameters and passed to the executor. In turn, the dispatching of the actions in the STN plan begins and continues until one observed duration or consumption rate happens to be outside of the DRE. At this point, the executor detects that the plan is no longer guaranteed to succeed, and re-planning is triggered.

The proposed approach was implemented in the ROSPlan~\cite{rosplan} framework, and experimentally evaluated along two directions.
The algorithm for DRE generation was compared against the base line in~\cite{robustness-envelopes}, demonstrating orders-of-magnitude improvements compared to the exact computation of REs, and the ability to deal with a much larger number of parameters.
The overall execution loop has been evaluated on a family of concrete case studies in a logistic domain, showing that the use of DREs,
compared to the optimistic executor in ROSPlan, significantly reduces the number of re-plannings and improves the execution success-rate.

\section{Background}

%
We consider planning problems expressed in the PDDL 2.1~\cite{pddl21} temporal planning language; for the sake of brevity we do not report the full syntax and semantics of such planning problems, but we directly introduce the parametrized planning problem idea adapted from~\cite{robustness-envelopes}.
\begin{definition}
  A \deftitle{parametrized planning problem} $\Problem_\Gamma$ is a tuple
  $\tuple{\Gamma, \Problem}$, where $\Gamma$ is a finite set of real-valued
  parameters $\{\gamma_1, \cdots, \gamma_n\}$ and $\Problem$ is a PDDL 2.1 planning
  problem in which conditions, effects, goals and initial states can
  contain parameters.
\end{definition}
\noindent
Intuitively, symbols (from a known set $\Gamma$) can be used in expressions where real-typed constants are usually allowed.

As customary in many cases of plan execution, we use plans expressed as Simple Temporal Networks (STN) \cite{dechter-stn}. An STN plan is a set of constraints of the form $t_i - t_j \le k$ where $t_i$ and $t_j$ are time points linked to action happenings (i.e. either the start or the end of an action instance in the plan) and $k \in \mathbb{Q}$.
In addition, we allow parameters in the plan specification by generalizing the notion of STN plans.
\begin{definition}
  A \deftitle{parametrized STN plan} $\pi_\Gamma$ for a parametrized planning problem $\Problem_\Gamma \eqbydef \tuple{\Gamma, \Problem}$ is an STN Plan where some constraints are in the form $t_i - t_j \le \gamma_i$ where $t_i$ and $t_j$ are time-points of the STN plan and $\gamma_i \in \Gamma$.
\end{definition}
We define the Robustness Envelope (RE) for a parametrized problem and plan as the set of possible values for the parameters that make the plan valid when the symbols are substituted with values in the plan and problem specifications.
In order to compute the RE, \citeauthor{robustness-envelopes} define a set of logical formulae that characterize the RE and use quantifier elimination techniques (e.g. \cite{fourier-motzkin}) to explicitly construct the region.
The encoding is divided in three expressions: indicated as $enc_{tn}^{\pi_\Gamma}$, $enc_{\mathit{eff}}^{\pi_\Gamma}$ and $enc_{\mathit{proofs}}^{\pi_\Gamma}$.
The formula $enc_{tn}^{\pi_\Gamma}$ encodes the temporal constraints imposed by $\pi_\Gamma$ limiting the possible orderings of time points. The formula $enc_{\mathit{eff}}^{\pi_\Gamma}$ encodes the effects of each time point on the state variables, while $enc_{\mathit{proofs}}^{\pi_\Gamma}$ encodes the validity properties of the plan, namely that the conditions of each executed action are satisfied, that the goal is reached, and that the $\epsilon$-separation constraint imposed by PDDL 2.1 is respected.
Then, let $\bar{X}$ be the set of all the variables appearing in the formulae above except the parameter values, the RE is characterized by all the models of the following formula.
$$\exists \bar{X} . (enc_{tn}^{\pi_\Gamma} \wedge enc_{\mathit{eff}}^{\pi_\Gamma} ) \wedge \forall \bar{X} . ((enc_{tn}^{\pi_\Gamma} \wedge enc_{\mathit{eff}}^{\pi_\Gamma}) \rightarrow enc_{\mathit{proofs}}^{\pi_\Gamma})$$

As observed by \citeauthor{robustness-envelopes}, any under-approximation of the RE gives sound information on the contingencies in which the plan is guaranteed to be valid; in particular, a convenient restriction for the representation and handling of REs is to associate a closed interval of possible values to each parameter, defining an hyper-rectangle. If such hyper-rectangle is contained in the RE, we have a "Decoupled Robustness Envelope" (DRE) that retains the guarantees of the RE but avoids the complexity of inter-dependencies among parameters.
\begin{definition}\label{def:dre}
  A \deftitle{Decoupled Robustness Envelope} for a parametrized planning problem $\Problem_\Gamma$ and STN plan $\pi_\Gamma$ is a bound assignment $\rho : \Gamma \rightarrow \mathbb{Q}_{>=0} \times \mathbb{Q}_{>=0}$, such that any parameter assignment $v : \Gamma \rightarrow \mathbb{Q}_{>=0}$, with $l \le v(\gamma) \le u$ and $\tuple{l, u} \eqbydef \rho(\gamma)$, is contained in the RE for $\Problem_\Gamma$ and $\pi_\Gamma$.
\end{definition}
\noindent
Note that many DREs are possible for a given problem and plan: it suffices that all the assignments allowed by the DRE are points in the RE. In this paper, we elaborate on this idea and propose an algorithm that incrementally builds DREs that are contained within the unconstrained RE without paying the cost of explicitly computing the RE itself.

Finally, we highlight how given any two DREs $\rho_1$ and $\rho_2$ three cases are possible: either $\rho_1$ is subsumed by $\rho_2$ (i.e. for each parameter $\gamma$, $\rho_1(\gamma) \subseteq \rho_2(\gamma)$), or $\rho_1$ subsumes $\rho_2$, or the two DREs are incomparable. Hence, there is no absolute best DRE in general: we aim for a DRE that is not subsumed by any other, but there can be multiple DREs with this property.

\section{Execution Flow}\label{sec:exe}

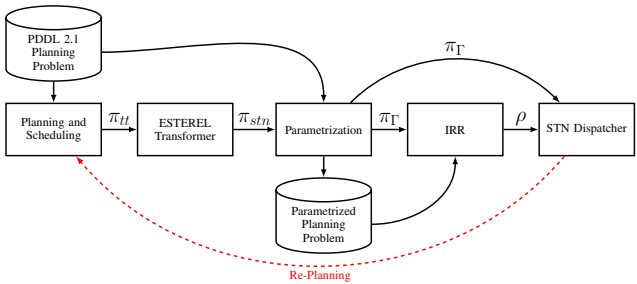
\begin{figure}[tb]
    \centering
    \resizebox{\columnwidth}{!}{\begin{tikzpicture}
  [
    data/.style = {cylinder, shape border rotate=90, draw=black, minimum height=1.5cm, minimum width=2.5cm, shape aspect=.25, very thick, align=center, text width=2.5cm},
    tool/.style = {rectangle, draw=black, thick, minimum height=1.5cm, minimum width=2.5cm, very thick,  align=center, text width=2.5cm},
    link/.style = {-latex, very thick}
  ]

  \node[data] (model) {PDDL 2.1 Planning Problem};
  \node[tool] (planning) [below=.6cm of model] {Planning and Scheduling};
  \node[tool] (tostn) [right=1cm of planning] {ESTEREL Transformer};
  \node[tool] (parametrization) [right=1.2cm of tostn] {Parametrization};
  \node[data] (pmodel) [below=.6cm of parametrization] {Parametrized Planning Problem};
  \node[tool] (irr) [right=1cm of parametrization] {IRR};
  \node[tool] (exec) [right=1cm of irr] {STN Dispatcher};

  \draw[link] (model) -- (planning);
  \draw[link] (planning) -- node[midway,above] {\huge $\pi_{tt}$} (tostn);
  \draw[link] (tostn) -- node[midway,above] {\huge $\pi_{stn}$} (parametrization);
  \draw[link] (model) to [out=0,in=90] (parametrization);
  \draw[link] (parametrization) -- node[midway,above] {\huge $\pi_\Gamma$} (irr);
  \draw[link] (parametrization) to [out=45,in=135] node[midway,above] {\huge $\pi_\Gamma$} (exec);
  \draw[link] (parametrization) -- (pmodel);
  \draw[link] (pmodel) to [out=0,in=270] (irr);
  \draw[link] (irr) -- node[midway,above] {\huge $\rho$} (exec);
  \draw[link, color=red, dashed] (exec) to [out=230,in=310] node[midway,below] {Re-Planning} (planning);
\end{tikzpicture}}
    \caption{Overview of the proposed flow.}
    \label{fig:flow}
\end{figure}

The general idea we pursue in this paper is to exploit the information and the generalization provided by the synthesis of REs to limit the number of re-plannings and increase the success-rate in execution. In particular, we propose the flow from planning to execution depicted in figure \ref{fig:flow}. Starting from a planning problem formulation expressed in PDDL 2.1, we use any off-the-shelf temporal planner\footnote{Several existing PDDL planners are unable to generate flexible STNs either because of an implementation limitation or because the technique does not allow it (e.g. SAT-based planners). Our approach is able to generate DREs from these planners as well, and work in concert with existing algorithms for the execution of STNs.} to compute a timed sequence of actions that reaches the goal from the initial state. We call this plan ``time-triggered'' (indicated with $\pi_{tt}$) in the picture. This plan is not natively amenable for execution because it defines one specific trace that does not allow any adaptability: it is extremely unlikely for a real system to be perfectly controlled to satisfy a specific trace. Hence, $\pi_{tt}$ needs to be converted in a flexible, executable STN ($\pi_{stn}$) by the ESTEREL transformer of ROSPlan.
The usual flow would pass this STN directly to the dispatcher for translating the plan actions into commands for the robotic platform at the proper time. Instead, here we pre-process this plan using REs in the hope of generalizing its applicability and reducing the number of re-plannings. In particular, the STN plan is passed to a parametrization component that re-reads the planning problem formulation and enriches it with parameters, generating a Parametric Planning Problem and a parametric STN plan ($\pi_\Gamma$). Those are the inputs for the computation of the RE. In our flow, for performance reasons and to avoid complex run-time reasoning, instead of computing the exact, unconstrained RE, we use a novel algorithm, called Incremental Rectangular-Robustification (IRR for short), that computes a DRE. The algorithm is anytime, so that it is possible to retrieve unfinished computations and exploit them in execution: in fact, any under-approximation of the final result retains the needed properties of the RE.
At this point, we pass the DRE ($\rho$) together with the parametrized STN plan to the STN dispatcher. We modified the dispatching algorithm to exploit the information in the DRE to limit the re-plannings to situations where they are needed. In particular, the dispatcher translates the actions, while checking that the observed values for the parameters (being either action durations, resources or rates) fall within the bounds imposed by $\rho$. If this is not the case, re-planning is needed and the whole flow is re-executed.

\myparagraph{Parametrization}
The first non-standard step highlighted in figure \ref{fig:flow} is the parametrization. In fact, there are multiple ways in which parameters can be added to a deterministic temporal planning problem to characterize useful quantities for execution. In general, one can parametrize any numeric quantity in the planing problem whose value might differ from the environment in which the plan will be executed. In order to be useful for the STN dispatcher, however, such quantities must be eventually observable (directly or indirectly). Otherwise, it is impossible for the executor to check whether the RE is still satisfied or if a re-planning is needed. In this paper experimentation, we focused on two such quantities, namely the durations of actions and resource consumption rates. The former is a classical source of uncertainty when temporal planning is employed in a robotics scenario, the latter is another source of uncertainty that can perturbate the execution of a plan, for example when the resource harvesting is not fully controllable (e.g. a solar panel yield depends on the weather) or when the consumption is not fully predictable (e.g. the battery consumption is very hard to precisely estimate as it depends on temperature, exact capacity and so on).

\section{Incremental Rectangular-Robustification}

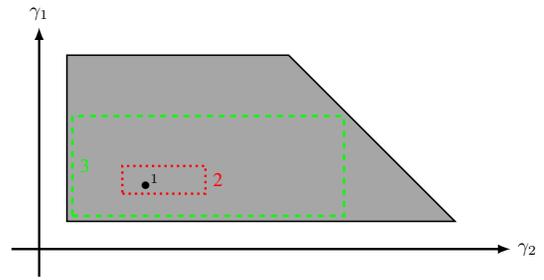
\begin{figure}[tb]
    \centering
    \resizebox{.85\columnwidth}{!}{\begin{tikzpicture}
  [
    axis/.style = {-latex, very thick},
    region/.style = {fill=gray!70, draw=black, thick},
  ]

  \draw[axis] (11.5, 5.0) -- (11.5, 9.5) node [above] {$\gamma_1$};
  \draw[axis] (11.0, 5.5) -- (20.0, 5.5) node [right] {$\gamma_2$};

  \draw[region] (12.0, 6.0) -- (19.0, 6.0) -- (16.0, 9.0) -- (12.0, 9.0) -- cycle;

  \node at (13.5, 6.7) {\textbullet$^1$};
 
  \draw[red, very thick, dotted] (13, 6.5) -- (14.5, 6.5) -- node [midway,right] {2} (14.5, 7) -- (13, 7) -- cycle;
 
  \draw[green, very thick, dashed] (12.1, 6.1) -- (17.0, 6.1) -- (17.0, 7.9) -- (12.1, 7.9) -- node [midway,right] {3} cycle;

\end{tikzpicture}}
    \caption{A graphical representation of IRR: starting from the parameter values from the original plan (depicted as the black point), IRR tries to construct increasingly better under-approximations (the colored rectangles) of the RE (the gray area), without actually computing it. Upon termination, each edge of the resulting DRE is guaranteed to be at most $\beta$ apart from the border of the actual region.}
    \label{fig:algo-intuition}
\end{figure}

We now present our novel algorithm for incrementally computing decoupled robustness envelopes. We call this algorithm "Incremental Rectangular-Robustification" (IRR).

The idea behind the algorithm is to construct incrementally better hyper-rectangular under-approximations of the RE for a given problem and plan. In fact, this constitutes a direct way of computing a DRE by generate-and-test. The starting point is the de-generated hyper-rectangle composed of the single point given by the parameter values of the original plan. The algorithm tries to extend the hyper-rectangle along one dimension (i.e. it tries to widen the interval of possibilities associated to one of the parameters) and checks if the resulting hyper-rectangle is in fact an under-approximation of the RE. If it is, the new hyper-rectangle is kept as it is guaranteed to be a valid DRE. Otherwise, another dimension or another increment is chosen for the algorithm to proceed. The general intuition behind the algorithm is depicted in figure  \ref{fig:algo-intuition}.

\begin{algorithm}[tb]
  \small
  \begin{algorithmic}[1]
    \State{$enc_{valid} \gets $ \Call{QuantifierElimination}{$\exists \bar{X} . enc_{tn}^{\pi_\Gamma} \wedge enc_{\mathit{eff}}^{\pi_\Gamma}$}}
    \vskip 3pt
    \Function{IRR}{$\beta:\mathbb{Q}_{>0}$}
    \State{$R \gets \{\gamma \rightarrow [\pi(\gamma), \pi(\gamma)] \mid \gamma \in \Gamma\}$}
    \State{$\Delta \gets \{\gamma \rightarrow max(\pi(\gamma) \times \omega_{\gamma}, \beta) \mid \gamma \in \Gamma \}$}
    \State{$\Theta \gets \{\gamma \rightarrow \{\texttt{UB}, \texttt{LB}\} \mid \gamma \in \Gamma \}$}
  	\While{$\exists \gamma \in \Gamma . \Delta(\gamma) \ge \beta$}
  	  \State{$\tilde{\gamma}\gets $ \Call{Pick}{$\{\gamma \mid \gamma \in \Gamma \wedge \Theta(\gamma) \not = \emptyset \wedge \Delta(\gamma) \ge \beta\}$}}
  	  \State{$\theta \gets $ \Call{Pick}{$\Theta(\tilde{\gamma})$}}
      \State{$[l, u] \gets R(\tilde{\gamma})$}
  	  \IfThenElse{$\theta = \texttt{UB}$}{$u \gets (u + \Delta(\tilde{\gamma}))$}{$l \gets (l - \Delta(\tilde{\gamma}))$}
  	  \State{$R' \gets \{\gamma \rightarrow R(\gamma) \mid \gamma \in \Gamma \wedge \gamma \not = \tilde{\gamma}\} \cup \{\tilde{\gamma} \rightarrow [l, u]\}$}
  	  \MyIf{\Call{CheckInEnvelope}{$R'$}}{$R \gets R'$}
  	  \MyElse
  	    \State{$\Theta(\tilde{\gamma}) \gets \Theta(\tilde{\gamma}) \setminus \theta$}
  	    \If{$\Theta(\tilde{\gamma}) = \emptyset$}
  	      \State{$\Delta(\tilde{\gamma}) \gets \Delta(\tilde{\gamma}) / 2$;  \ \ $\Theta(\tilde{\gamma}) \gets \{\texttt{LB}, \texttt{UB}\}$}
  	    \EndIf
  	  \EndMyElse
  	\EndWhile
  	\State{\Return{$R$}}
  	\EndFunction
  	\vskip 3pt
  	\Function{CheckInEnvelope}{$R$}
  	  \State{$enc_R \gets \bigwedge_{\gamma \in \Gamma, [l, u] = R(\gamma)} l \le \bar{\gamma} \wedge \bar{\gamma} \le u$}
  	  \MyIf{\Call{IsSAT}{$enc_R \wedge \neg enc_{valid}$}}{\Return{false}}
  	  \MyElse
  	    \State{\Return{\Call{IsValid}{$(enc_{tn}^{\pi_\Gamma} \wedge enc_{\mathit{eff}}^{\pi_\Gamma} \wedge enc_R) \rightarrow enc_{\mathit{proofs}}^{\pi_\Gamma}$}}}
  	  \EndMyElse
 	\EndFunction
  \end{algorithmic}
  \caption{\label{algo:irr} Incremental Rectangular-Robustification}
\end{algorithm}

Algorithm \ref{algo:irr} reports the pseudo-code of IRR. The formula $enc_{valid}$ is computed once and off-line. It corresponds to the basic requirements for the hyper-rectangle to be a valid DRE: only parameter values that are not contradicting the STN plan and the causal flow of effects are admissible. This is the same as the first piece of the logical formulation in~\cite{robustness-envelopes}, but luckily it is the easier part of the quantification and can be efficiently computed.
Then, the \textsc{IRR} function is in charge of computing a hyper-rectangle $R$ maintaining the following invariant: at each step, $R$ is a subset of the RE of the problem. The hyper-rectangle $R$ is represented as a pair of bounds (lower- and upper-) assigned to each parameter (this directly models a DRE as per definition \ref{def:dre}), and is initialized (line 3) with the values of the non-parametric plan $\pi$. The algorithm uses two functions to control how the hyper-rectangle is transformed from one cycle to the next. $\Delta$ associates to each parameter a number that is the value used to increase the upper-bound or to decrease the lower-bound for that parameter.
The initial value for $\Delta$ is the original value of the parameter scaled by a weight for such parameter, but any positive number bigger than $\beta$ is enough to guarantee soundness and termination of the algorithm. Note that these weights can be used to express preferences on the parameters: a higher weight pushes the algorithm to expand a specific parameter more than others.
The function $\Theta$ is used to decide in which direction the interval of a parameter can be extended. Two directions are possible: \texttt{UB} indicates that we want to extend the upper-bound and \texttt{LB} indicates that we want to decrease the lower-bound (line 10). Initially both directions are possible, but when we discover (line 13) that one direction is infeasible with the current $\Delta$, we remove this direction from the possibilities. The value of $\Delta$ gets refined to converge to a value lower than $\beta$, so each time $\Delta$ gets updated, we reset $\Theta$ to allow both directions once again.

The main loop of the algorithm continues until all the values of $\Delta$ are lower than $\beta$: this is to guarantee that the minimum distance from each border of the hyper-rectangle and the border of the actual RE is at most $\beta$. The algorithm picks a parameter $\tilde{\gamma}$ to be analyzed among the parameters having at least one direction available in $\Theta$ and that have not converged already (line 7); then, it generates a candidate hyper-rectangle $R'$ by extending either the lower- or the upper- bound of $\tilde{\gamma}$. At this point, we check if $R'$ is contained in the RE or not (line 12). If it is, we keep it and continue the loop, otherwise, we discard this hyper-rectangle and we record that with the current $\Delta$ we cannot extend $\tilde{\gamma}$ in this direction by removing the direction $\theta$ from $\Theta(\tilde{\gamma})$. Moreover, if no direction is left for $\tilde{\gamma}$, we halve its value of $\Delta$ and reset $\Theta$ so that $\tilde{\gamma}$ can be tentatively extended again using a smaller step (lines 15-16).

The core part of the algorithm consists in checking a candidate hyper-rectangle for containment in the actual RE, without explicitly computing the region itself. This is done via the \textsc{CheckInEnvelope} function that performs two SMT checks corresponding to the two quantifiers appearing in the RE logical formulation of \cite{robustness-envelopes}. The first check looks for points belonging to $R$ that are not parts of the validity region $enc_{valid}$, the second checks if the rectangle (together with the guarantees from the plan and the effects) implies the proof requirements characterizing the REs. The important point here, is that both checks are quantifier-free, i.e. no quantifier elimination is involved.
\begin{theorem}
   The \textsc{CheckInEnvelope}($R$) function returns $true$ if and only if $R$ is a valid DRE
\end{theorem}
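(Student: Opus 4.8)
The plan is to unfold Definition~\ref{def:dre} together with the logical characterization of the RE given above, and then to match the two quantifier-free queries issued by \textsc{CheckInEnvelope} against the two conjuncts of that characterization, restricted to the points of the candidate hyper-rectangle $R$. Throughout I would drop the $\pi_\Gamma$ superscripts, writing $enc_{tn}$, $enc_{\mathit{eff}}$ and $enc_{\mathit{proofs}}$, denote by $\bar{\gamma}$ the tuple of parameter variables and by $\bar{X}$ all the remaining variables, and record at the outset the one structural fact that makes the argument go through: $enc_R$, as built inside \textsc{CheckInEnvelope}, constrains only the parameter variables $\bar{\gamma}$, so $\bar{X}$ does not occur free in it.

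First I would unfold the two sides of the biconditional. On the specification side, by Definition~\ref{def:dre} the bound assignment $R$ is a valid DRE exactly when every parameter valuation $v$ with $v \models enc_R$ lies in the RE, i.e.\ satisfies both
\[
\text{(A)}\ \ \exists \bar{X}.\,(enc_{tn} \wedge enc_{\mathit{eff}})
\qquad\text{and}\qquad
\text{(B)}\ \ \forall \bar{X}.\,\big((enc_{tn} \wedge enc_{\mathit{eff}}) \rightarrow enc_{\mathit{proofs}}\big).
\]
On the algorithmic side, \textsc{CheckInEnvelope}$(R)$ returns $true$ precisely when the first test does not short-circuit --- that is, \textsc{IsSAT}$(enc_R \wedge \neg enc_{valid})$ returns $false$ --- \emph{and} \textsc{IsValid}$\big((enc_{tn} \wedge enc_{\mathit{eff}} \wedge enc_R) \rightarrow enc_{\mathit{proofs}}\big)$ returns $true$. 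So it suffices to show that the first test is passed iff (A) holds at every point of $R$, and the second test is passed iff (B) does.

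For the first equivalence I would invoke the correctness --- soundness and completeness --- of the \textsc{QuantifierElimination} step over linear real arithmetic, which guarantees that $enc_{valid}$ is a quantifier-free formula over $\bar{\gamma}$ logically equivalent to $\exists \bar{X}.\,(enc_{tn} \wedge enc_{\mathit{eff}})$. Hence ``every $v \models enc_R$ satisfies (A)'' is equivalent to $enc_R \models enc_{valid}$, which is equivalent to unsatisfiability of $enc_R \wedge \neg enc_{valid}$, i.e.\ to the first test being passed. For the second equivalence I would rewrite ``every $v \models enc_R$ satisfies (B)'' as $\forall \bar{\gamma}.\,\big(enc_R \rightarrow \forall \bar{X}.\,((enc_{tn} \wedge enc_{\mathit{eff}}) \rightarrow enc_{\mathit{proofs}})\big)$ and then, using that $\bar{X}$ is not free in $enc_R$, merge the two quantifier prefixes into $\forall \bar{\gamma}\,\forall \bar{X}.\,\big((enc_{tn} \wedge enc_{\mathit{eff}} \wedge enc_R) \rightarrow enc_{\mathit{proofs}}\big)$, which is exactly the validity claim decided by \textsc{IsValid}. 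Chaining the two equivalences yields: \textsc{CheckInEnvelope}$(R)$ returns $true$ iff (A) and (B) both hold at every $v \models enc_R$ iff $R$ is a valid DRE.

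I expect the propositional and quantifier bookkeeping to be routine, so the genuinely load-bearing points are the two side conditions. The main one is the justification that $enc_{valid}$ faithfully represents $\exists \bar{X}.\,(enc_{tn} \wedge enc_{\mathit{eff}})$: this is precisely the correctness of the quantifier-elimination procedure, which I would take as given (it is the one piece inherited from~\cite{robustness-envelopes}). A secondary point needing care is the free-variable tracking that licenses the quantifier pull-out in the (B) case, since it relies on $\bar{X}$ being \emph{exactly} the set of non-parameter variables. Finally, a minor wrinkle worth a sentence is that Definition~\ref{def:dre} quantifies over \emph{rational}-valued parameter assignments whereas the two SMT queries range over the reals; this is harmless because a linear-arithmetic formula is satisfiable over $\mathbb{Q}$ iff it is satisfiable over $\mathbb{R}$, so both the (un)satisfiability test and the validity test transfer to the rational setting in which the RE and the DRE are defined.
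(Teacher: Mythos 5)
Your proposal is correct and follows essentially the same route as the paper's proof: both match the two quantifier-free SMT checks against the two conjuncts of the RE characterization, use the fact that $enc_R$ constrains only the parameter variables to rewrite the conjunction as ``every point of $enc_R$ lies in the RE'', and conclude by Definition~\ref{def:dre}. Your version merely spells out the side conditions (correctness of the quantifier elimination producing $enc_{valid}$, the free-variable bookkeeping, and the $\mathbb{Q}$ versus $\mathbb{R}$ point) that the paper leaves implicit.
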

\begin{proof}
    The algorithm logically checks the following formula: $\neg (\exists \bar{\Gamma} . enc_R \wedge \neg enc_{valid}) \wedge \forall \bar{\Gamma}, \bar{X} . (enc_{tn}^{\pi_\Gamma} \wedge enc_{\mathit{eff}}^{\pi_\Gamma} \wedge enc_R) \rightarrow enc_{\mathit{proofs}}^{\pi_\Gamma}$, that can be rewritten as $\forall \bar{\Gamma} . enc_R \rightarrow (enc_{valid} \wedge (\forall \bar{X} . (enc_{tn}^{\pi_\Gamma} \wedge enc_{\mathit{eff}}^{\pi_\Gamma}) \rightarrow enc_{\mathit{proofs}}))$ that states that $enc_R$ is a subset of the encoding of the RE. Then, for definition \ref{def:dre}, R is the encoding of a valid DRE.
\end{proof}

An interesting feature of the algorithm is that it is ``anytime'', i.e. at each time, we can take the hyper-rectangle $R$ and we have the guarantee that $R$ is contained in the RE and is thus a valid DRE.
Moreover, the algorithm is guaranteed to terminate if the RE is finite in all dimensions.
\begin{theorem}
    If the the robustness envelope is bounded in all dimensions, \textsc{IRR} always terminates.
\end{theorem}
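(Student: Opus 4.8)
\ The plan is to show that the \textsc{while} loop of \textsc{IRR} performs only finitely many iterations; since every operation it invokes (the off-line quantifier elimination of line~1, and the satisfiability/validity queries inside \textsc{CheckInEnvelope}) is a terminating decision procedure for linear real arithmetic, this yields termination. I would classify each iteration by the parameter $\tilde\gamma$ picked on line~7 and by whether \textsc{CheckInEnvelope}($R'$) returns true (a \emph{success}, so $R\gets R'$) or false (a \emph{fail}), and then bound, for each $\gamma\in\Gamma$ separately, the number of successes and the number of fails on $\gamma$.

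For the fails, I would argue purely from the dynamics of $\Delta(\gamma)$ and $\Theta(\gamma)$ --- boundedness of the RE is not needed here. The value $\Delta(\gamma)$ is only ever halved (line~15), and only after $\gamma$ has been picked, which by line~7 requires $\Delta(\gamma)\ge\beta$; since $\Delta(\gamma)$ is never increased, starting from $\Delta_0(\gamma)=\max(\pi(\gamma)\,\omega_\gamma,\beta)$ it takes only finitely many values $\ge\beta$ before dropping below $\beta$ permanently, after which $\gamma$ is never picked again. Partition the run into \emph{epochs} delimited by the halvings of $\Delta(\gamma)$: there are finitely many. Within one epoch $\Theta(\gamma)$ starts as $\{\texttt{LB},\texttt{UB}\}$ (it is so initialized on line~5, and reset so on every halving), each fail on $\gamma$ strictly shrinks it (line~14), and once it is empty the epoch ends; hence at most two fails per epoch. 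A success leaves both $\Theta$ and $\Delta$ untouched, so it cannot re-open an epoch's fail budget. Therefore finitely many fails on $\gamma$.

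For the successes, I would invoke the invariant of the preceding theorem: $R$ is replaced by $R'$ only when $R'$ is a valid DRE, i.e.\ $R'\subseteq\mathrm{RE}$, so every hyper-rectangle produced by the loop is contained in the RE. Since the RE is bounded in all dimensions, $m_\gamma\eqbydef\inf_{v\in\mathrm{RE}} v(\gamma)$ and $M_\gamma\eqbydef\sup_{v\in\mathrm{RE}} v(\gamma)$ are finite, and the bounds $[l_\gamma,u_\gamma]=R(\gamma)$ always satisfy $m_\gamma\le l_\gamma\le u_\gamma\le M_\gamma$. The value $u_\gamma$ is nondecreasing along the run and is increased by exactly $\Delta(\gamma)\ge\beta$ at each \texttt{UB}-success on $\gamma$ (line~10); starting from $\pi(\gamma)$ and never exceeding $M_\gamma$, it admits at most $(M_\gamma-\pi(\gamma))/\beta$ such successes, and symmetrically at most $(\pi(\gamma)-m_\gamma)/\beta$ \texttt{LB}-successes. (If the nominal point of $\pi$ is itself not in the RE, every candidate $R'$ contains it, so \textsc{CheckInEnvelope} never succeeds, there are no successes at all, and termination follows from the fail bound alone; otherwise the invariant holds from the base case on and these estimates apply.) Summing over the finitely many parameters, the total number of iterations is at most $\sum_{\gamma\in\Gamma}\big((M_\gamma-m_\gamma)/\beta + O(\log(\Delta_0(\gamma)/\beta))\big)<\infty$, which proves the claim.

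The main obstacle is not any single inequality but getting the bookkeeping right: keeping the two termination mechanisms cleanly separated (the geometric decay of $\Delta$ together with the two-direction $\Theta$ budget, which alone bounds the fails and needs no hypothesis, versus the monotone and $\beta$-quantized growth of the $R$-bounds, which needs boundedness of the RE and bounds the successes), correctly handling the interleaving of successes and fails within an epoch, and checking the degenerate corner where an \texttt{LB}-extension would drive $l_\gamma$ below $0$ --- there $R'$ leaves $\mathbb{Q}_{\ge 0}$, hence cannot be a subset of the RE and is rejected, consistently with $l_\gamma\ge m_\gamma\ge 0$ used above.
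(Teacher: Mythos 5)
Your proof is correct and rests on essentially the same mechanism as the paper's own (much terser) argument: boundedness of the RE caps the number of successful extensions because each one widens some interval by at least $\beta$ (line 7 guarantees $\Delta(\tilde\gamma)\ge\beta$ whenever a parameter is picked), while the $\Theta$-removal and $\Delta$-halving bookkeeping caps the number of failed attempts, so eventually $\Delta(\gamma)<\beta$ for every $\gamma$ and the loop guard fails. Your write-up is in fact more rigorous than the paper's two-sentence sketch, making explicit the $\beta$-quantization of successes, the per-epoch fail budget, and the degenerate cases (nominal point outside the RE, lower bound crossing $0$) that the paper glosses over.
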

\begin{proof}
    All the values in $\Delta$ are initially positive and whenever the candidate rectangle is found to exit the RE (line 13) one of the values in $\Delta$ is halved. Eventually all the parameters will be considered and they will be eventually found to exit the RE because it is bounded in all dimensions. Therefore, all the values of $\Delta$ will become smaller than $\beta$.
\end{proof}

We highlight that \textsc{IRR} is in fact an optimization procedure that incrementally maximizes the size of a starting DRE, terminating when a maximal DRE is found within the given precision limit $\beta$.

\begin{figure*}[t]
  \centering
  \begin{subfigure}[b]{.325\textwidth}
    \includegraphics[width=\textwidth]{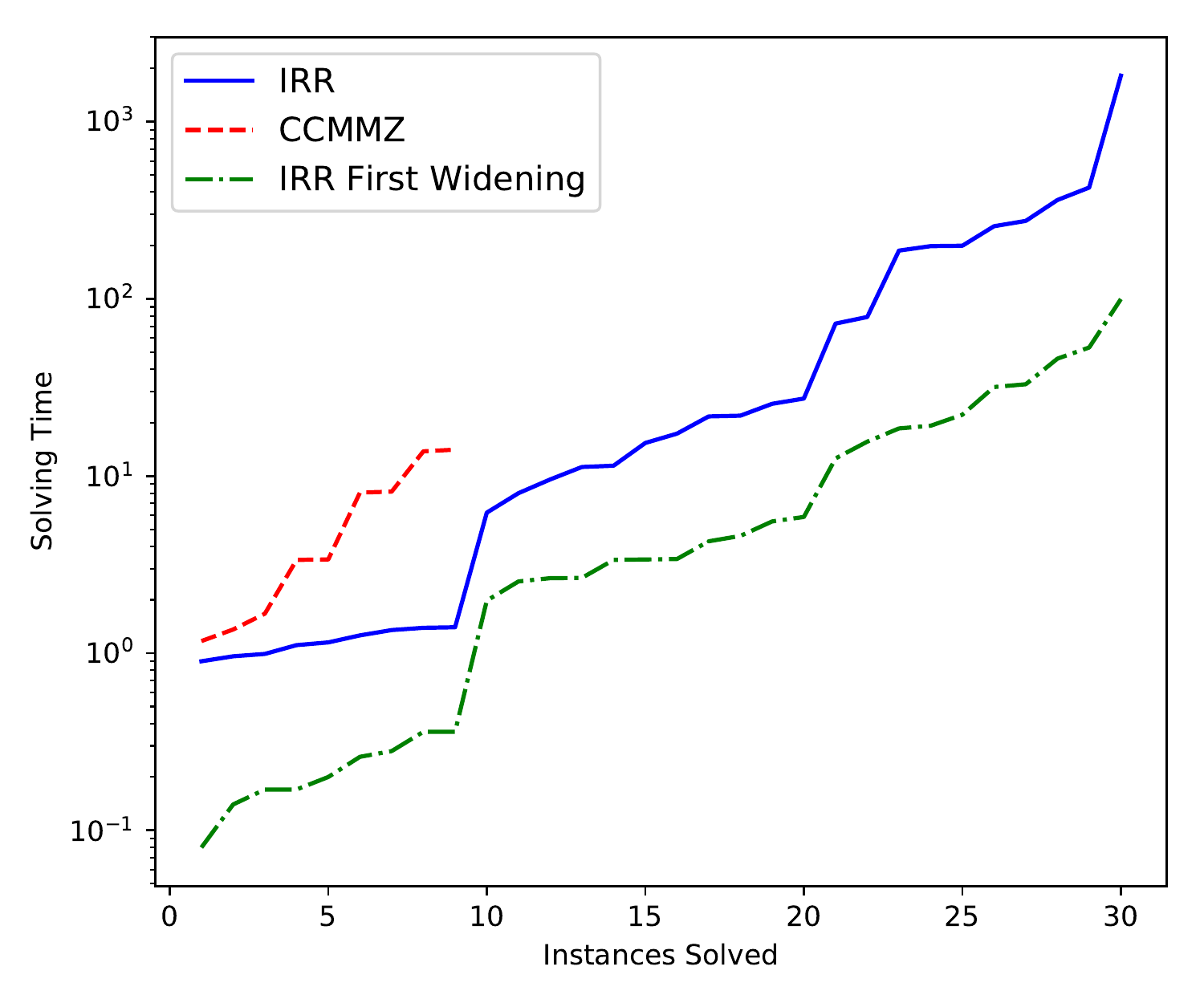}
    \caption{\label{fig:auv}AUV Domain.}
  \end{subfigure}
  \hfill
  \begin{subfigure}[b]{.325\textwidth}
    \includegraphics[width=\textwidth]{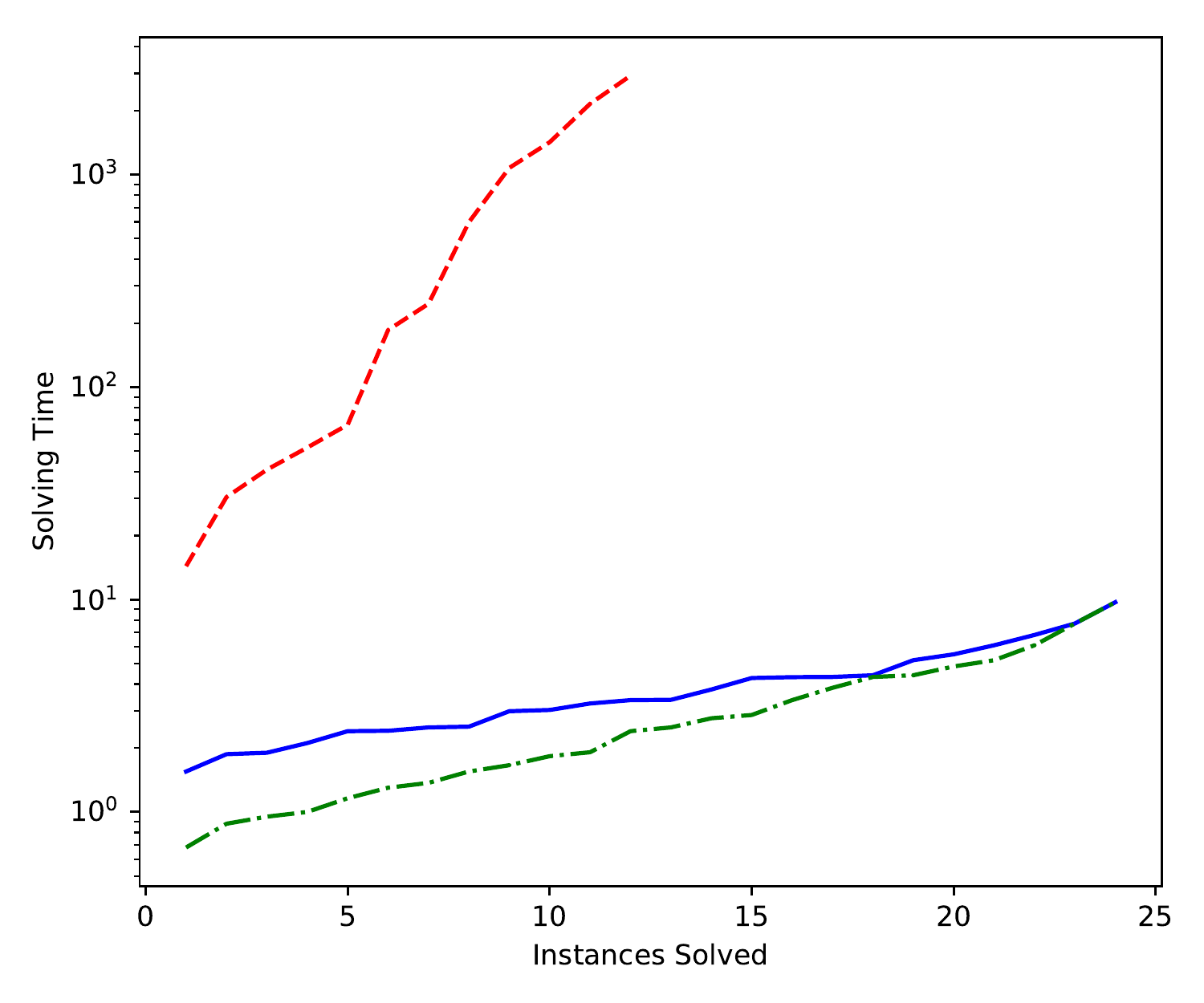}
    \caption{\label{fig:solar-rover}Solar-Rover Domain.}
  \end{subfigure}
  \hfill
  \begin{subfigure}[b]{.325\textwidth}
    \includegraphics[width=\textwidth]{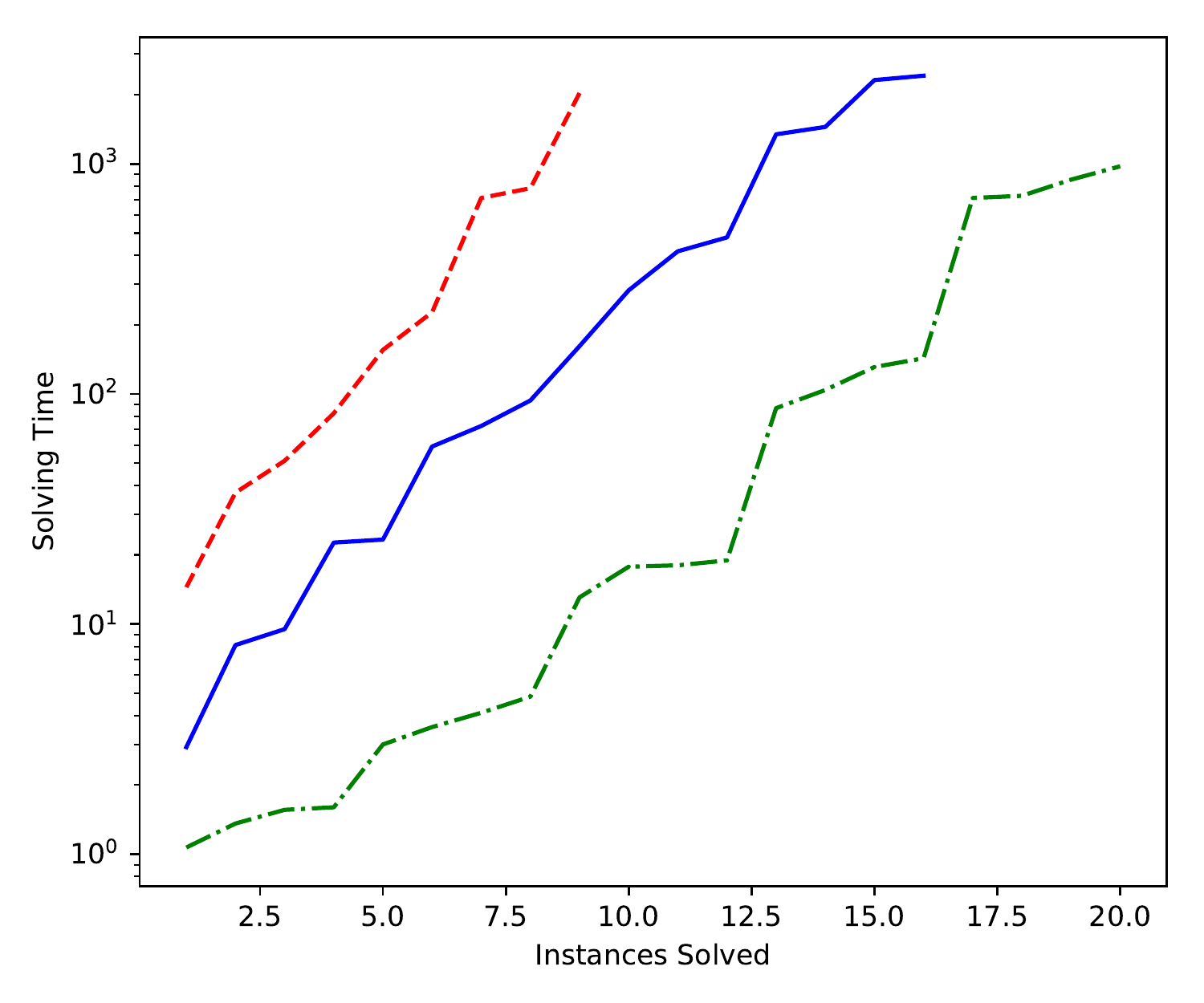}
    \caption{\label{fig:generator}Generator Linear Domain.}
  \end{subfigure}
  \caption{\label{fig:domains} Scalability experiments on [Cashmore \emph{et al.}, 2019] domains: the number of solved instances (sorted by difficulty for each solver) is considered on the X axis and is compared with the logarithmic time needed to solve each instance (lower, longer lines are better).}
\end{figure*}

\section{Experiments}

We now present our empirical analysis that comprises three sets of experiments. The first aims at showing the superior performance of \textsc{IRR} as compared with the logical approach of~\cite{robustness-envelopes}. The second shows a practical use-case of the execution flow proposed in this paper when the duration of actions is uncertain. In the third experimentation we use our DRE technique to execute plans when the consumption rates of resources is uncertain.

\myparagraph{IRR}
We start by considering the experimental dataset and the tool (hereafter called \textsc{CCMMZ}) provided in~\cite{robustness-envelopes}.
The benchmarks use a varying number of parameters; in particular, AUV ranges from 1 to 8 parameters, Generator Linear from 1 to 4 and  Solar Rover between 1 and 4.
We compare our IRR implementation with \textsc{CCMMZ} on all the available instances and domains, measuring the total run-time and using the ``decoupled envelope generation'' functionality of the tool. Moreover, in order to take into account the anytime nature of IRR, we also measure the time at which the rectangle $R$ in IRR widens and becomes different than a single point (i.e. we measure the first time the algorithm \ref{algo:irr} reaches line 17) and we call this timing ``IRR First Widening''. In all our experiments we set $\beta = 1$ and all $\omega_i = 1$ to find the decoupled region approximated to a single unit with no preferences among the parameters (obviously, we set the same parameter preference also in \textsc{CCMMZ}). We executed all of the instances on a Xeon E5-2620 2.10GHz machine setting a time limit of 3600s and a RAM memory limit of 20GB.

Figure \ref{fig:domains} shows the result of this analysis. IRR is able to solve many more instances than \textsc{CCMMZ} and is consistently quicker. Moreover, we note how the first widening is often encountered quite early in the execution, marking the margin for anytime exploitation of IRR. In fact, after the first widening, IRR already computed a meaningful and non-trivial under-approximation of the RE that can be used for execution. This is particularly evident in the Generator Linear domain where the algorithm is unable to fully terminate in some cases, but the first widening point is reached.

In addition to these domains, we also experiment with several instances of a service-robot domain that we also use for the following execution experimental analysis. The domain, called ``Robot Delivery'' is a simplified version\footnote{A simplified RCLL domain was used because the PDDL provided in the RCLL image is not complete and the RCLL simulation requires external processes, e.g. a referee box. We are interested in the flexible execution success rate, so we created PDDL instances encoding logistics problems without any external processes.} of the domain used in the Planning and Execution Competition for Logistics Robots in Simulation~\cite{rcll}. The domain comprises a fleet of small robots that can navigate in an euclidean graph. These robots are tasked to pick and deliver orders within a deadline. Collecting orders requires two robots present at a machine. We scaled the number of parameters in the instances between 1 and 33.
\begin{figure}[tb]
  \centering
  \includegraphics[width=.75\columnwidth]{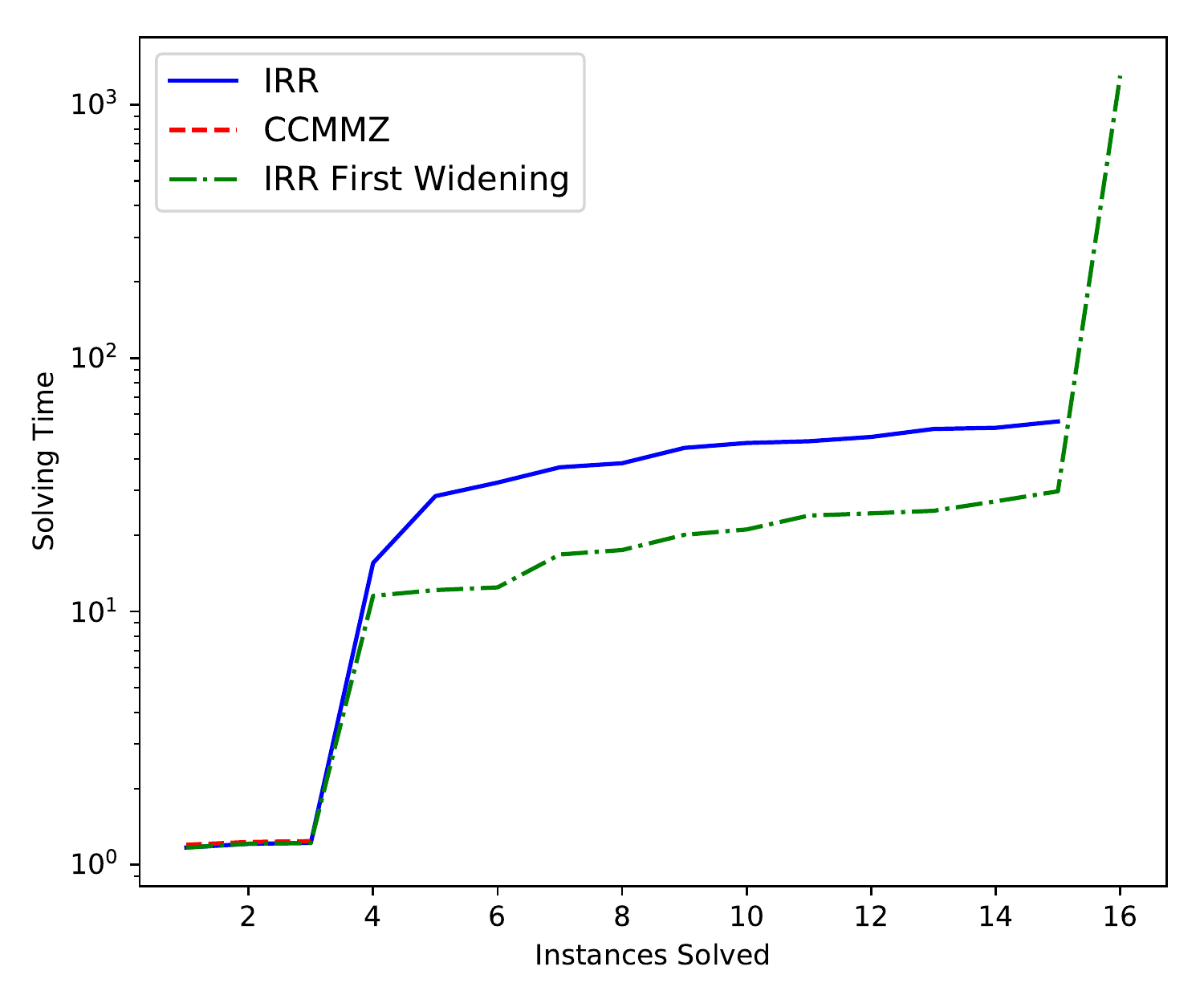}
  \caption{\label{fig:delivery} Scalability experiments on the delivery domain: the number of solved instances (sorted by difficulty for each solver) is considered on the X axis and is compared with the logarithmic time needed to solve each instance (lower, longer lines are better).}
\end{figure}
Figure \ref{fig:delivery} shows the scalability of IRR and \textsc{CCMMZ} on this domain. These instances are much harder for both the solvers compared to the previous domains; in fact, \textsc{CCMMZ} is only able to solve 3 instances, while IRR is able to solve 15 of them. Also in this case, the anytime nature of IRR is evident by observing the difference from the first widening and the algorithm completion.

\begin{figure}[tb]
  \centering
  \includegraphics[width=.75\columnwidth]{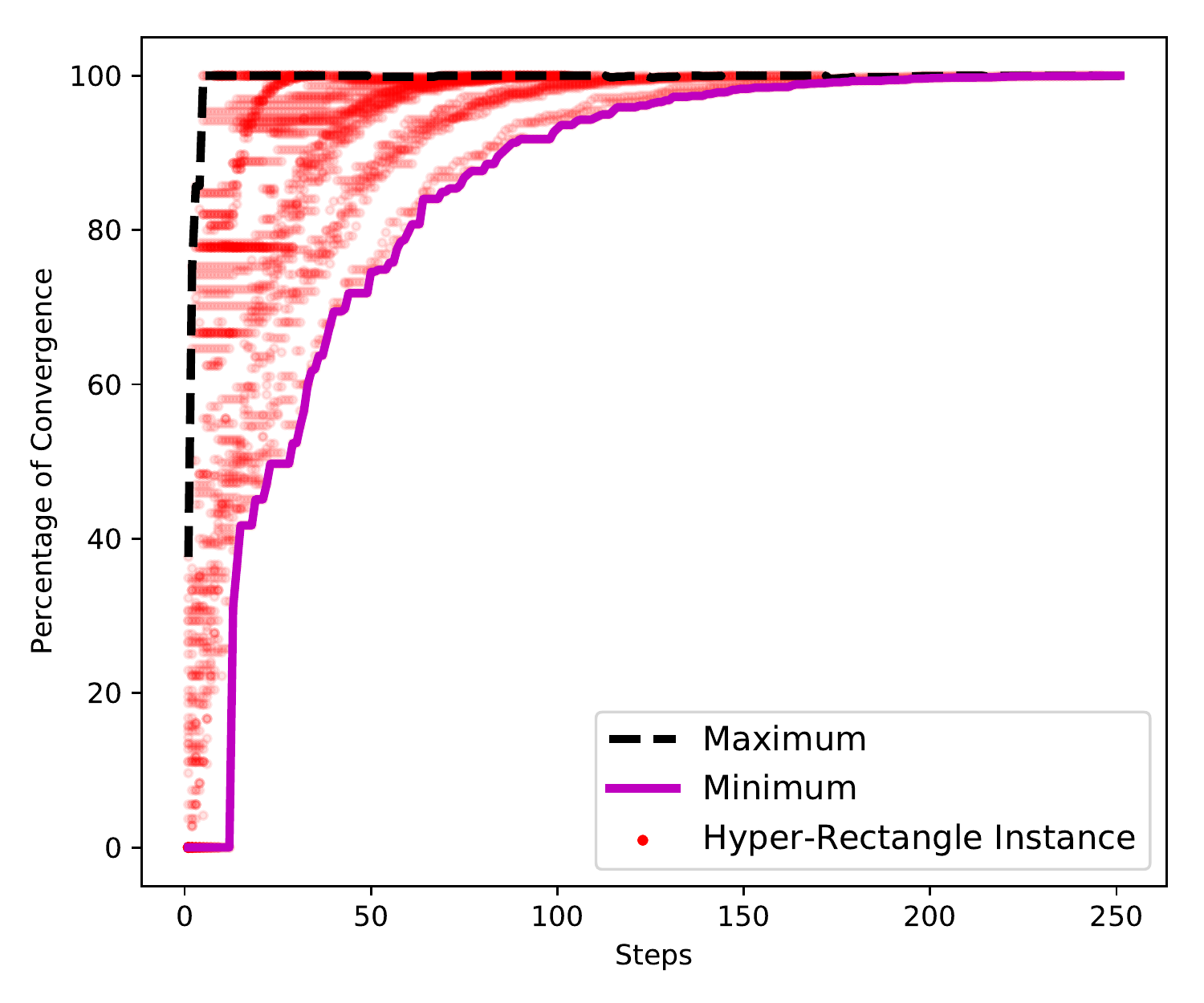}
  \caption{\label{fig:convergence} Convergence of IRR in terms of steps: each red dot is a DRR computed by IRR and the plot shows the progression in terms of convergence at each step of the algorithm. The purple line indicates the poorest convergence percentage for each step in any experiment; similarly the black, dashed line shows the best convergence.}
\end{figure}

Finally, we investigate how quickly the IRR algorithm converges in our experiments. We define convergence at step $i$ in a run of IRR that terminates with hyper-rectangle $R_{end}$ as follows ($R_i$ indicates the hyper-rectangle at step $i$).
$$
\resizebox{.75\columnwidth}{!}{$Convergence(i) = \frac{\sum_{[l, u] \in R_i} u-l}{\sum_{[l, u] \in R_{end}} u-l} \times 100$}
$$
\noindent
Intuitively, this gives the percentage of the region covered by $R_i$ with respect to $R_{end}$. (Note that $R_{end}$ contains $R_i$ because the IRR algorithm only expands previous hyper-rectangles.)
Figure \ref{fig:convergence} shows, for all problems solved by IRR in our benchmark set, the percentage of convergence achieved after any number of steps of the IRR algorithm. Clearly from the plot, in a limited number of steps we often approximate very well the final intervals; in particular, within 50 steps we already cover more than 70\% of the final sum of the interval sizes in all the cases.

\begin{table*}[tb]
  \centering
  \resizebox{.9\textwidth}{!}{
  \setlength{\tabcolsep}{2pt}
\begin{tabular}{l|cc|cc|cc|cc|cc}
  \multirow{2}{*}{\bf Executor} & \multicolumn{2}{c|}{\bf 1 Parameter} & \multicolumn{2}{c|}{\bf 2 Parameters} & \multicolumn{2}{c|}{\bf 3 Parameters} & \multicolumn{2}{c|}{\bf 4 Parameters} & \multicolumn{2}{c}{\bf 5 Parameters}\\
  & \bf \small Coverage  & \bf \small Avg Replans & \bf \small Coverage  & \bf \small Avg Replans & \bf \small Coverage  & \bf \small Avg Replans & \bf \small Coverage  & \bf \small Avg Replans & \bf \small Coverage  & \bf \small Avg Replans\\
  \hline &&&&&&&&&&\\[-1.5ex]
  \dreexec & \bf 92.2\% & \bf 0.1& \bf 85.8\% & \bf 0.2& \bf 83.2\% & \bf 0.2& \bf 72.8\% & \bf 0.1& \bf 63.0\% & \bf 0.1 \\
  \blexec{0} & 0.0\% & NA& 0.0\% & NA& 0.0\% & NA& 0.0\% & NA& 0.0\% & NA \\
  \blexec{10} & 24.5\% & 1.0& 4.8\% & 1.0& 0.8\% & 1.2& 0.2\% & 0.8& 0.0\% & NA \\
  \blexec{20} & 44.4\% & 0.8& 19.9\% & 1.0& 6.3\% & 1.0& 2.4\% & 1.6& 0.8\% & 1.9 \\
  \blexec{30} & 58.9\% & 0.7& 34.0\% & 0.9& 18.8\% & 1.2& 9.2\% & 1.2& 4.5\% & 1.4 \\
  \blexec{40} & 68.8\% & 0.6& 52.2\% & 0.8& 34.7\% & 1.0& 23.8\% & 1.1& 11.8\% & 1.5 \\
  \blexec{50} & 75.0\% & 0.5& 62.2\% & 0.7& 49.0\% & 0.9& 37.8\% & 1.1& 27.9\% & 1.2 \\
  \blexec{60} & 78.8\% & 0.4& 69.0\% & 0.5& 59.4\% & 0.7& 53.4\% & 0.9& 44.5\% & 1.1 \\
\end{tabular}
  }
  \caption{Coverage and average number of re-plans in the duration-uncertain delivery domain.}
  \label{tab:results}
\end{table*}

\myparagraph{Duration-Uncertain Flexible Execution}
We use the Robot Delivery domain to investigate the merits of an on-line plan executor equipped with our IRR algorithm. In particular, we begin by focusing the analysis on the number of re-plans and on the plan execution success rate when only the duration of actions is uncertain during execution.
In this domain, a robot has to collect a spindle from a shelf, construct a base by performing six steps (possibly in parallel), then mount a number of rings, and finally deliver the order. Orders have deadlines that must be met for delivery.
The domain allows the agent to drop an order and restart from scratch with a new one at any time, but this disposal action takes some time (10 seconds in our case) and the robot needs to navigate on a symbolic euclidean graph to pick the parts, assemble and deliver the order.
Each instance is simulated in an environment where actions have a non-deterministic duration described by a normal distribution with a minimum value. Due to the difficulty in manipulation tasks, the actions executed for preparing the base (in which the robots interact with machines) have the highest degree of variance. These actions have mean durations of $120$, $130$, $140$, $150$, $160$ and $170$ seconds, and a standard deviation of $70$. Due to this uncertainty and the presence of deadlines for the order delivery, the execution of a plan can fail even when a re-planning schema is employed.
We generated a total of 100 problems by varying the deadlines for the orders.

Our DRE-based approach was implemented in ROSPlan, as described in section \ref{sec:exe}. The STN dispatcher starts the execution of actions following the temporal constraints of the STN: the process is illustrated in algorithm \ref{algo:dispatch}. For each node, the minimum and maximum dispatch times are calculated during execution (line 5). The dispatch ends when an action completes outside of the temporal constraints allowed by the STN, or has not been started after the maximum allowed dispatch time. When the dispatch ends, it returns $true$ if the goals have been achieved; otherwise, re-planning is triggered as shown in figure \ref{fig:flow}. The system will continuously attempt to re-plan until the deadlines make the PDDL planning problem unsolvable.

\begin{algorithm}[tb]
  \small
  \begin{algorithmic}[1]
    \Function{STNDispatch}{$\pi_{stn},\rho$}
    \State{$finished = false$}
  	\While{$\neg finished$}
    \ForEach{node $n \in \pi_{stn}$}
    \State{$min, max \gets$ \Call{MinMaxDispatchTime}{$n,\pi_{stn},\rho$}}
    \If{$n$ is action start}
        \If{$(min \leq n \leq max) \wedge \neg$\Call{started}{$n$}}
            \State{\Call{StartExecuting}{$n$}}
        \ElsIf{$(n \geq max) \wedge \neg$\Call{started}{$n$}}
            \State{$finished = true$}
        \EndIf
    \ElsIf{$n$ is action end}
        \If{$(n \geq max) \wedge \:\neg$\Call{Completed}{$n$}}
            \State{$finished = true$}
        \ElsIf{$(n \leq min) \wedge \:$\Call{Completed}{$n$}}
            \State{$finished = true$}
        \EndIf
    \EndIf
    \EndFor
    \EndWhile
  	\State{\Return{\Call{GoalsAchieved}{ }}}
  	\EndFunction
  \end{algorithmic}
  \caption{\label{algo:dispatch} STN Dispatch}

\end{algorithm}

We compare the executor described in section \ref{sec:exe} (indicated as \dreexec) against several baselines in which we dispatch the STN plan $\pi_{stn}$ without parameterization. In such baselines, the executor dispatches the STN plan allowing for a fixed deviation in the duration of actions and ends dispatch only when the action duration falls outside of this interval. This is the optimistic technique for execution implemented in ROSPlan that, differently from \dreexec, offers no formal guarantees. We consider baseline executors named \blexec{0} to \blexec{60} allowing for 0\% to 60\% variability in action duration before triggering a re-plan. For example, given an action with a predicted duration 100 seconds, \blexec{0} will re-plan if the duration is not exactly 100; \blexec{20} will re-plan if the duration is outside of the interval $[80,120]$. The baseline \blexec{0} corresponds to formally executing the time-triggered plan $\pi_{tt}$: re-planning happens if any action duration differs from what was expected in $\pi_{tt}$. We highlight that, when \dreexec is employed and the observation is within the envelope computed by IRR, we have the formal guarantee of plan success; as soon as one observation is outside of the envelope, we choose to re-plan.

The overarching idea in these experiments is that the planner usually optimistically selects the easier, quicker goal and the agent starts to execute the plan. If the execution of the preparation actions goes overlong, it might become impossible to deliver the order, so the only way to successfully recover is to immediately dispose the current order and switch to another one with a less imminent deadline. If the executor fails in realizing this situation, it continues to execute the plan until it tries to deliver the order, at which point it realizes that the deadline is not met. Since a lot of time has been wasted in the preparation, it might be impossible to recover from this situation.
Ideally, we expect that the predictive power of DREs allows the identification of situations where the deadline cannot be met and a swift re-planning to change the objective order is needed.

Table~\ref{tab:results} reports the results of the experiment. We report the coverage percentage (i.e. the percentage of problems successfully executed over the benchmark set) as well as the average number of re-plannings for successful runs. The baseline \blexec{0}, not accounting for any variance in action duration, was unable to solve any problem successfully. Allowing for more flexibility in the duration of actions increases the coverage as should be expected. However, the \dreexec approach achieves greater coverage than all baselines in all the cases. This is because in this problem, the ability to realize early that the agent is late for the first order and change course of actions to achieve the second order is pivotal for achieving a good success rate.

\myparagraph{Resource-Uncertain Flexible Execution}
Finally, we show that our flow can be used when parameters are not just action durations. We expanded the delivery domain to consider the battery consumption of the robots. In particular, each action in the revised domain checks that enough battery is present upon start and consumes a fixed amount of battery. We parametrized the consumption rate of actions, so that the DRE will compute the possible consumption values for which a given plan is valid. The executor is then demanded to observe the contingent consumption and possibly invoke a re-planning if the observation does not fall in the DRE prescription. Also in this case, the baselines \blexec{X} invoke the replanning when the battery consumption is observed to be $X\%$ higher or lower than the nominal value.

\begin{table}[tb]
  \centering
  \resizebox{.75\columnwidth}{!}{
  \setlength{\tabcolsep}{5pt}
\begin{tabular}{l|c|c}
  \bf Executor & \bf Coverage  & \bf Avg Replans \\
  \hline &&\\[-1.5ex]
  \dreexec & \bf 99.2\% & \bf 0.1\\
  \blexec{0} & 1.0\% & 2.0\\
  \blexec{10} & 25.6\% & \bf 0.1\\
  \blexec{20} & 50.7\% & \bf 0.1\\
  \blexec{30} & 66.4\% & \bf 0.1\\
  \blexec{40} & 77.9\% & \bf 0.1\\
  \blexec{50} & 82.1\% & \bf 0.1\\
  \blexec{60} & 86.8\% & \bf 0.1\\
\end{tabular}
  }
  \caption{Coverage and average number of re-plans in the resource-uncertain delivery domain.}
  \label{tab:res-results}

  \vspace{-0.7cm}

\end{table}

Table~\ref{tab:res-results} reports the results of the experiment, and
shows how the use of \dreexec is beneficial for the success-rate
achieving an almost perfect success-rate with very few replannings on
average.


\vspace{-0.8cm}

\section{Conclusion}

In this paper, we make the case for the use of REs in a plan execution framework. We present a novel, anytime algorithm to compute DREs that is empirically superior to the previously known logic-based construction. Moreover, we demonstrate the usefulness of the produced artifacts by integrating them in the ROSPlan framework and showing on a concrete example the positive impact on the number of re-plannings and the plan success-rate.

In the future, we will consider other kinds of approximations for the robustness envelope (e.g. hyper-octagons instead of hyper-rectangles). We will also explore the link to temporal uncontrollability and non-deterministic planning. Finally, using IRR in parallel with the dispatcher could allow variation in parameters being considered during execution.

\vspace{-0.2cm}
\section{Acknowledgements}

This work was partially supported by Innovate UK grant 133549: \textit{Intelligent Situational Awareness Platform}, and by EPSRC grant  EP/R033722/1: \textit{Trust in Human-Machine Partnerships}. Also, this work is partially supported by the Autonomous Province of Trento in the scope of L.P. n.6/1999 with grant MAIS (Mechanical Automation Integration System) n. 2017-D323-00056 del. n. 941 of 16/06/2017 and by EIT DIgital within the "AWARD" project.

\newpage
\bibliographystyle{named}
\bibliography{refs.bib}

\begin{thebibliography}{}

\bibitem[\protect\citeauthoryear{Barrett \bgroup \em et al.\egroup
  }{2009}]{smt}
C.~Barrett, R.~Sebastiani, S.~Seshia, and C.~Tinelli.
\newblock Satisfiability modulo theories.
\newblock In {\em Handbook of Satisfiability}, pages 825--885. IOS Press, 2009.

\bibitem[\protect\citeauthoryear{Cashmore \bgroup \em et al.\egroup
  }{2015}]{rosplan}
M.~Cashmore, M.~Fox, D.Long, D.~Magazzeni, B.~Ridder, A.~Carrera, N.~Palomeras,
  N.~Hurt{\'{o}}s, and M.~Carreras.
\newblock Rosplan: Planning in the robot operating system.
\newblock In {\em Proceedings of the Twenty-Fifth International Conference on
  Automated Planning and Scheduling, {ICAPS} 2015, Jerusalem, Israel, June
  7-11, 2015.}, pages 333--341, 2015.

\bibitem[\protect\citeauthoryear{Cashmore \bgroup \em et al.\egroup
  }{2019}]{robustness-envelopes}
M.~Cashmore, A.~Cimatti, D.~Magazzeni, A.~Micheli, and P.a Zehtabi.
\newblock Robustness envelopes for temporal plans.
\newblock In {\em AAAI}, 2019.

\bibitem[\protect\citeauthoryear{Cesta \bgroup \em et al.\egroup }{2009}]{apsi}
A.~Cesta, G.~Cortellessa, S.~Fratini, A.~Oddi, and R.~Rasconi.
\newblock {The APSI Framework: a Planning and Scheduling Software Development
  Environment}.
\newblock In {\em ICAPS (Application Showcase)}, 2009.

\bibitem[\protect\citeauthoryear{Cimatti \bgroup \em et al.\egroup
  }{2018}]{micheli_aij}
A.~Cimatti, M.~Do, A.~Micheli, M.~Roveri, and D.~Smith.
\newblock Strong temporal planning with uncontrollable durations.
\newblock {\em Artif. Intell.}, 256:1--34, 2018.

\bibitem[\protect\citeauthoryear{Dechter \bgroup \em et al.\egroup
  }{1991}]{dechter-stn}
R.~Dechter, I.~Meiri, and J.~Pearl.
\newblock Temporal constraint networks.
\newblock {\em Artificial Intelligence}, 49(1-3):61--95, 1991.

\bibitem[\protect\citeauthoryear{Do and Kambhampati}{2003}]{flexibility-do}
M.~Do and S.~Kambhampati.
\newblock Improving temporal flexibility of position constrained metric
  temporal plans.
\newblock In {\em ICAPS}, pages 42--51, 2003.

\bibitem[\protect\citeauthoryear{Fox and Long}{2003}]{pddl21}
M.~Fox and D.~Long.
\newblock {PDDL2.1: An Extension to PDDL for Expressing Temporal Planning
  Domains}.
\newblock {\em Journal of Artificial Intelligence Research}, 20:61--124, 2003.

\bibitem[\protect\citeauthoryear{Frank and J\'onsson}{2003}]{europa}
J.~Frank and A.~J\'onsson.
\newblock {Constraint-based Attribute and Interval Planning}.
\newblock {\em Constraints}, 8(4):339--364, 2003.

\bibitem[\protect\citeauthoryear{Frank and
  Morris}{2007}]{frank-linear-envelope}
J.~Frank and P.~Morris.
\newblock Bounding the resource availability of activities with linear resource
  impact.
\newblock In {\em ICAPS}, pages 136--143, 2007.

\bibitem[\protect\citeauthoryear{Ghallab and Laruelle}{1994}]{ixtet}
M.~Ghallab and H.~Laruelle.
\newblock Representation and control in {IxTeT}, a temporal planner.
\newblock In {\em AIPS}, pages 61--67, 1994.

\bibitem[\protect\citeauthoryear{Ghallab \bgroup \em et al.\egroup
  }{2004}]{traverso-book}
M.~Ghallab, D.~Nau, and P.~Traverso.
\newblock {\em Automated planning - theory and practice}.
\newblock Elsevier, 2004.

\bibitem[\protect\citeauthoryear{Ingrand and Ghallab}{2017}]{ing17}
F{\'{e}}lix Ingrand and Malik Ghallab.
\newblock Deliberation for autonomous robots: {A} survey.
\newblock {\em Artificial Intelligence}, 247:10--44, 2017.

\bibitem[\protect\citeauthoryear{Mausam and
  Kolobov}{2012}]{probabilistic-planning}
Mausam and A.~Kolobov.
\newblock Planning with markov decision processes: An ai perspective.
\newblock {\em Synthesis Lectures on Artificial Intelligence and Machine
  Learning}, 6(1):1--210, 2012.

\bibitem[\protect\citeauthoryear{Muscettola}{2002}]{muscettola-envelope}
N.~Muscettola.
\newblock Computing the envelope for stepwise-constant resource allocations.
\newblock In {\em CP}, pages 139--154, 2002.

\bibitem[\protect\citeauthoryear{Niemueller \bgroup \em et al.\egroup
  }{2015}]{rcll}
T.~Niemueller, G.~Lakemeyer, and A.~Ferrein.
\newblock The robocup logistics league as a benchmark for planning in robotics.
\newblock {\em Planning and Robotics (PlanRob-15)}, page~63, 2015.

\bibitem[\protect\citeauthoryear{Policella \bgroup \em et al.\egroup
  }{2004}]{policella-flexibility}
N.~Policella, S.~Smith, A.~Cesta, and A.~Oddi.
\newblock Generating robust schedules through temporal flexibility.
\newblock In {\em ICAPS}, pages 209--218, 2004.

\bibitem[\protect\citeauthoryear{Schrijver}{1998}]{fourier-motzkin}
A.~Schrijver.
\newblock {\em Theory of Linear and Integer Programming}.
\newblock J. Wiley \& Sons, 1998.

\bibitem[\protect\citeauthoryear{Umbrico \bgroup \em et al.\egroup
  }{2018}]{platinum}
A.~Umbrico, A.~Cesta, M.~Cialdea Mayer, and A.~Orlandini.
\newblock Integrating resource management and timeline-based planning.
\newblock In {\em ICAPS}, pages 264--272, 2018.

\bibitem[\protect\citeauthoryear{Williams and Gupta}{1999}]{rmpl}
B.~Williams and V.~Gupta.
\newblock Unifying model-based and reactive programming within a model-based
  executive.
\newblock In {\em Workshop on Principles of Diagnosis}, 1999.

\end{thebibliography}

\end{document}